\begin{document}
	\title{New Techniques for Inferring L-Systems Using Genetic Algorithm\thanks{This research was supported in part by a grant from the Plant Phenotyping and Imaging Research Centre.}}
	\author{Jason Bernard \and Ian McQuillan}
	\institute{Department of Computer Science, University of Saskatchewan, Saskatoon, Canada 
		\email{jason.bernard@usask.ca}, 
		\email{mcquillan@cs.usask.ca}}
	
	\maketitle
	
	\begin{abstract}
		Lindenmayer systems (L-systems) are a formal grammar system that iteratively rewrites all symbols of a string, in parallel. When visualized with a graphical interpretation, the images have self-similar shapes that appear frequently in nature, and they have been particularly successful as a concise, reusable technique for simulating plants. The L-system inference problem is to find an L-system to simulate a given plant. This is currently done mainly by experts, but this process is limited by the availability of experts, the complexity that may be solved by humans, and time. This paper introduces the Plant Model Inference Tool (PMIT) that infers deterministic context-free L-systems from an initial sequence of strings generated by the system using a genetic algorithm. PMIT is able to infer more complex systems than existing approaches. Indeed, while existing approaches are limited to L-systems with a total sum of $20$ combined symbols in the productions, PMIT can infer almost all L-systems tested where the total sum is $140$ symbols. This was validated using a testbed of 28 previously developed L-system models, in addition to models created artificially by bootstrapping larger models.
		
	\end{abstract}
	
	\section{Introduction}
	
	Lindenmayer systems (L-systems), introduced in \cite{lindenmayer_lsystems}, are a formal grammar system that produces self-similar patterns that appear frequently in nature, and especially in plants \cite{beauty}. L-systems produce strings that get rewritten over time in parallel. Certain symbols can be interpreted as instructions to create sequential images, which can be visually simulated by software such as the ``virtual laboratory'' (vlab) \cite{algorithmicbotany}. Such simulations are useful as they can incorporate different geometries \cite{beauty}, environmental factors \cite{drp_peach}, and mechanistic controls \cite{drp_auxin}, and are therefore of use to simulate and understand plants. L-systems often consist of small textual descriptions that require little storage compared to real imagery. Certainly also, they have a very low cost in currency, time, and labour, compared to actually growing a plant, as they produce the simulation very quickly even with low cost computers.
	
	An L-system is denoted by a tuple $G=(V,\omega,P)$, which consists of an alphabet $V$ (a finite set of allowed symbols), an axiom $\omega$ that is a word over $V$, and a finite set of productions, or rewriting rules, $P$. A deterministic context-free L-system or a D0L-system, has exactly one rule for each symbol in $V$ of the form $A \rightarrow x$, where $A  \in  V$ (called the predecessor) and $x$ is a word over V (called the successor). Words get rewritten according to a derivation relation, $\Rightarrow$, whereby $A_{1} \cdots A_{n} \Rightarrow x_{1} \cdots x_{n}$, where $A_{i}  \in V, x_{i}$ is a word, and $A_{i} \rightarrow x_{i}$ is in $P$, for each $i$, $1 \leq i \leq n$. Normally, one is concerned with derivations starting at the axiom, $\omega \Rightarrow x_{1} \Rightarrow x_{2} \Rightarrow \cdots \Rightarrow \omega_{n}$. Hence, a derivation involves replacing every character with its successor. For example, applying $A \to AB$ in a derivation involves replacing every $A$ in a word with $AB$. 
	
	Applying the rules iteratively produces repetitive patterns, and if the symbols are interpreted visually, they produce self-similar shapes. One common alphabet for visualization is the turtle graphics alphabet \cite{beauty}, so-called as it is imagined that each word produced by the L-system contains a sequence of instructions that causes a turtle to draw an image with a pen attached. The turtle has a state consisting of a position on a (usually) 3D grid and an angle, and the common symbols that cause the turtle to change states and draw are: $F$ (move forward with pen down), $f$ (move forward with pen up), $+$ (turn left), $-$ (turn right), $[$ (start a branch), $]$ (end a branch), $\&$ (pitch up), $\wedge$ (pitch down), \textbackslash (roll left), / (roll right), $\mid$ (turn around 180$^\circ$). Of special attention for branching models are the branching parantheses as [ causes the state to be pushed on a stack and ] causes the state to be popped and the turtle switches to it. It is assumed that the right hand side of rewriting rules have parantheses that are properly nested.  Additional symbols are added to the alphabet, such as $A$ and $B$, to represent the underlying growth mechanics. For example, a D0L-system can produce the image in Fig. \ref{fig:1} after 7 generations. Even more realistic 3D models may be produced with further extensions of L-systems, and a well-constructed model.
	
	\begin{figure}
		\centering
		\includegraphics[scale=0.4]{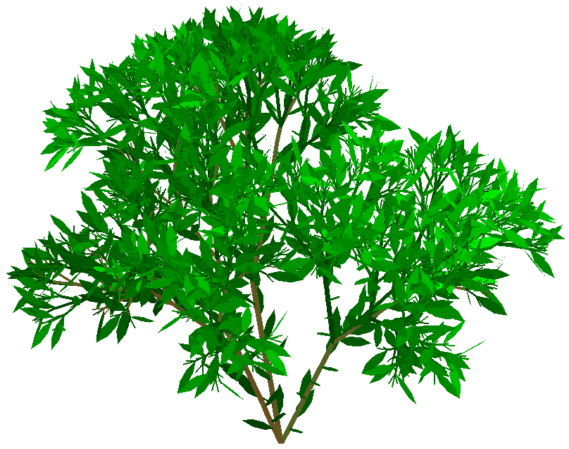}
		\caption{Fibonacci Bush after 7 generations \cite{beauty}.}
		\label{fig:1}
	\end{figure}
	
	A difficult challenge is to determine an L-system that can accurately simulate a plant. In practice, this often involves manual measurements over time, scientific knowledge, and is done by hand by experts \cite{beauty}. Although this approach has been successful, it does have notable drawbacks. Producing a system manually requires an expert that are in limited supply, and it does not scale to producing arbitrarily many (perhaps closely related) models. Furthermore, the more complex plant models require a priori knowledge of the underlying mechanics of the plant, which are increasingly difficult and time consuming to acquire. To address this, automated approaches have been proposed to overcome these limitations in two ways. One way is to use the automated approach as an aide for the expert to reduce the work load \cite{jacob_inferflowers,mock_wildwood}. Alternatively, the process may be fully automated to find an L-system that matches observed data \cite{nakano_inferD0Lerrorfree, runqiang_inferGA}. This approach has the potential to scale to constructing thousands of models, and also has the potential to expose biomechanics rather than requiring its knowledge beforehand.
	
	The ultimate goal of this direction of research is to automatically determine a model from a sequence of plant images over time. An intermediate step would be to infer the model from a sequence of strings used to draw the images. This is known as the inductive inference problem, defined as follows. Given a sequence of strings $\alpha = (\omega_{0}, \ldots, \omega_{n})$, determine a D0L-system (or that none exists) $G = (V,\omega, P)$ such that $\omega = \omega_{0} \Rightarrow \omega_{1} \Rightarrow \cdots \Rightarrow \omega_{n}$. That is, the problem is to determine some D0L-system that gives the sequence of input strings at the beginning of its derivation.
	
	This paper introduces the Plant Model Inference Tool (PMIT) that aims to be a fully automated approach to inductive inference of L-systems. Towards that goal, PMIT uses a genetic algorithm (GA) to search for an L-system to match the words produced. In general, GAs search solution spaces in accordance with the encoding scheme used for the problem, and to-date most existing approaches to L-system inference use similar encoding schemes. This paper presents a different encoding scheme and shows that it is more effective for inferring L-systems. Additionally, some logical rules are used as heuristics to shrink the solution space all of which are based in necessary conditions that must hold. Between these two techniques, it is determined that PMIT is able to infer L-systems where the sum of the right hand side of the rewriting rules is approximately $140$ symbols in length; whereas, other approaches are limited to about $20$ symbols. Moreover, the test bed used to test PMIT is significantly larger than previous approaches. Indeed, $28$ previously developed D0L-systems are used and for these systems that PMIT properly inferred, it did so in an average of $17.762$ seconds. Furthermore, additional (in some sense ``artificial'') models are created by combining the existing models where the combined length of the successors is longer than $140$ symbols (which PMIT does not solve within four hours), and then randomly removing ``F'' symbols until it can solve them. This work can be seen as a step towards the goal of 3D scanning a plant over time, and then converting the scanned imagery into symbols that describe how to draw it using a simulator, then inferring the L-system from the sequence of strings. 
	
	The remainder of this paper is structured as follows. Section 2 will describe some existing automated approaches for inferring L-systems. Section 3 will describe the logical rules used to shrink the solution space. Section 4 will discuss the genetic algorithms and the new encoding scheme used by PMIT. Section 5 will discuss the methodology used to evaluate PMIT and discuss the results. Finally, Section 6 concludes the work and discusses future directions.
	
	\section{Background}
	
	This section describes useful contextual and background information relevant to understanding this paper. It starts with describing some notation used throughout the paper. Since GA is used as the search mechanism for this work, it contains a brief description of genetic algorithms. Then, the section concludes with a discussion on some existing approaches to L-system inference. 
	
	\subsection{Notation}
	
	An alphabet is a finite set of symbols. Given an alphabet $V$, a word over $V$ is any sequence of letters written $a_{1}a_{2}\cdots a_{n}$, $a_{i}  \in V, 1 \leq i \leq n$. The set of all words over $V$ is denoted by $V^{*}$. Given a word $x \in V^{*}$, $|x|$ is the length of $x$, and $|x|_{A}$ is the number of $A$'s in $x$, where $A \in V$.
	
	Given an D0L-system $G=(V,\omega,P)$ as defined in Section $1$, the successor of $A$ is indicated by $succ(A)$. Given two words $x,y \in V^{*}$, then $x$ is a substring of $y$ if $y = uxv$, for some $u,v \in V^{*}$ and in this case $y$ is said to be a superstring of $x$. Also, $x$ is a prefix of $y$ if $y=xv$ for some $v$, and $x$ is a suffix of $y$ if $y=ux$ for some $u$. Additional notations are more easily understood in context and explained when appropriate.
	
	\subsection{Background on Genetic Algorithm}
	
	The GA is described as follows by Mitchell \cite{back_geneticalgorithm}. The GA is an optimization algorithm, based on evolutionary principles, used to efficiently search N-dimensional (usually) bounded spaces. In evolutionary biology, increasingly fit offspring are created over successive generations by intermixing the genes of parents. An encoding scheme is applied to convert a problem into a virtual genome consisting of N genes. Each gene is either a binary, integer, or real value and represents, in a problem specific way, an element of the solution to the problem. For example, in the travelling salesman problem (TSP), there are many types of possible encodings \cite{back_geneticalgorithm}; however, in this work two variations of \textit{value encoding} are used, which simply means the gene values are interpreted contextually for the problem. One common type of value encoding is \textit{literal encoding} \cite{back_geneticalgorithm}; for the TSP, each gene has an integer range from 1 to C, where C is the number of cities, and so each number represents the next city visited. Such an encoding is best when the number of options at each step of the problem are known and independent of earlier choices. For TSP, this encoding requires the graph of cities to be treated as fully connected and city connections that do not exist in the real-world are given extremely high weights to have the GA avoid choosing them. Another form of value encoding is a real value mapping to encode the genes onto the set of options available at each problem step \cite{back_geneticalgorithm}, e.g. a real value from $0$ and $1$, where the number of options available can depend on context. This encoding works best when the options at each step of the problem are unknown or dependent on prior choices. For TSP, this encoding allows the GA to only choose from connections that exist at each city without knowing how many choices might exist. If the salesman starts at node $A$ and there are two options, say $A \rightarrow B$ and $A \rightarrow C$, a gene can be encoded as a value between $0$ and $1$, where values up to $0.5$ represent the choice $A \rightarrow B$, and above $0.5$ represent $A \rightarrow C$. If there are three options from $B$ and four from $C$, then a gene encoded as a mapping can be decoded regardless of what traversal was made from $A$ as the three options from $B$ will each have a $33\%$ chance of being selected and each of the four options from $C$ will have a $25\%$ chance of selection, both based on a value from $0$ to $1$. 
	
	%The drawback to encoding schemes with a real-valued mapping is that multiple values may decode to the same solution; thereby reducing diversity within the population and causing premature convergence. To overcome this, the population stores both the genome and the decoded solution. Genomes with a duplicate solution to an existing population member are not permitted within the population.
	
	The GA functions by first creating an initial population ($P$) of random solutions. Each member of the population is assessed using a problem specific fitness function to assign a fitness value to each population member. Then the GA, controlled by certain parameters, performs a selection, crossover, mutation, and survival step until a termination condition is reached. In the selection step, a set of S pairs of genomes are selected from the population with odds in proportion to their fitness, i.e. preferring more fit genomes. A genome may be selected multiple times for different pairings, but the same two members may not be selected twice. During the crossover step, for each selected pair, a random selection of genes are copied between the two; thereby, producing two offspring. Each gene has a chance of being swapped equal to the control parameter \textit{crossover weight}. The mutation step takes each offspring and randomly changes zero or more genes to a random value with each gene having a chance of being mutated equal to the \textit{mutation weight}. Then each offspring is assigned a fitness value from the fitness function. The offspring are placed into the population and genomes are culled until the population is of size P again. Usually, the most fit members are kept (elite survival) but other survival operators exist that can prefer to preserve some genetic diversity. The termination condition may be based on such criteria as finding a solution with sufficient fitness, a pre-determined maximum number of generations, or a time limit.
	
	\subsection{Existing Automated Approaches to L-system Inference}
	
	Various approaches to L-system inference were surveyed in \cite{ben_naoum_surveryLsystems}. Here, we only mention certain works most closely related to ours. There are several different broad approaches towards the problem: building by hand \cite{beauty}, algebraic approaches \cite{doucet_algebra, nakano_inferD0Lerrorfree}, using logical rules \cite{nakano_inferD0Lerrorfree}, and search approaches \cite{runqiang_inferGA}. Since PMIT is a hybrid approach incorporating a search algorithm, GA, together with logical rules to reduce intractability by shrinking the search space, this section will examine some existing logic-based and search-based approaches.
	
	Inductive inference was studied theoretically (without implementation) by Doucet \cite{doucet_algebra}. He devised a method that uses solutions to Diophantine equations to, in many cases, find a D0L-system that starts by generating the input strings. A somewhat similar approach was implemented with a tool called LGIN \cite{nakano_inferD0Lerrorfree} that infers L-systems from a single observed string $\omega$. They recognize, using a similar method to Doucet \cite{doucet_algebra}, that the successors for the symbols in the alphabet can be expressed as a matrix, where a cell's value represents the number of symbols of each letter $a_{j}$ produced by each letter $a_{i}$. From the matrix, a set of equations are defined that relate, for $a_{i} \in V$, the number of symbols $a_{i}$ observed in $\omega$ to the linear combination of the production values in the matrix. From there, LGIN looks exhaustively at the successor combinations, extracted from $\omega$ that fulfill these equations. Since only a single string is used, LGIN does not guarantee to find a unique solution without being re-run. LGIN is limited to two symbol alphabets (common convention has that alphabet size does not count turtle graphic symbols), which is still described as ``immensely complicated'' \cite{nakano_inferD0Lerrorfree}; however, LGIN was evaluated on six variants of ``Fractal Plant'' \cite{beauty} and was very fast having a peak time to find the L-system of less than one second for 5 of the 6 variants, and four seconds for the remaining variant.
	
	Runqiang et al. \cite{runqiang_inferGA} propose to infer an L-system from an image using a GA. In their approach, they encode each gene to represent a symbol in the successors. The fitness function attempts to match the candidate system to the observed data. As an input, they use an image of the shape produced by an L-system to be inferred, as seen for example in Figure \ref{fig:1}. Their fitness function uses image processing to match the image produced by a candidate solution to the input image since their focus is on finding the proper angles and line lengths for drawing the image, in addition to the L-system itself. Their approach is limited to an alphabet size of $2$ symbols and a maximum total length of all successors of $14$. Their approach is 100\% successful for a variant of ``Fractal Plant'' \cite{beauty} with $|V| = 1$, and a 66\% success rate to find a variant of ``Fractal Plant'' \cite{beauty} with $|V| = 2$. Although they do not list any timings, their GA converged after a maximum of 97 generations, which suggests a short runtime.
	
	\section{PMIT Methodology for Logically Deducing Facts about Successors}\label{section:rules}
	
	In this section, the methodology that is used by PMIT to reduce the size of the solution space with heuristics --- all of which are based on necessary conditions for D0L-systems --- will be described. As all these conditions are mathematically true, this guarantees that a correct solution is in the remaining search space (if there is a D0L-system that can generate the input). The next section will then describe the GA used to search this space. Indeed, the success and efficiency of a search algorithm is generally tied to the size of the solution space. In PMIT, logical rules are used to reduce the dimensional bounds in two contexts. The first context is to determine a lower bound $\ell$ and upper bound $u$ on the number of each symbol $B$ produced by each symbol $A$ for each $A,B \in V$; that is, such that $\ell \le |succ(A)|_{B} \le u$, henceforth called growth of $B$ by $A$. A second context is a separate lower $\ell$ and upper bound $u$ on the length of each successor for each $A \in V$, such that $\ell \leq |succ(A)| \leq u$. The computation of growth partially relies on the computed lower and upper bounds of the lengths and vice versa, so all the rules are run in a loop until convergence (i.e. until the bounds stop improving). The program therefore initializes, for each $A \in V$, two programming variables $A_{min}$ and $A_{max}$ (whose values iteratively change by the program) such that $A_{min} \leq |succ(A)| \leq A_{max}$ and their values improve as the program runs. Similarly, for each $A,B \in V$, the program creates two variables $(A,B)_{min}$ and $(A,B)_{max}$, that change similarly such that $(A,B)_{min} \leq |succ(A)|_{B} \leq (A,B)_{max}$. 
	
	For this paper, it is assumed that if a turtle graphic symbol has an identity replacement rule (produces only itself, e.g. $\text{+} \rightarrow \text{+}$), then this is known in advance. Typically, the turtle graphic symbols do have an identity production; however, there are some instances where ``F'' may not (e.g. some of the variants of ``Fractal Plant'' \cite{beauty}). In such a case, ``F'' is treated as a non-turtle graphics symbol for the purposes of inferring the L-system although it is still visually interpreted as ``draw a line with the pen down''. Additionally, it is assumed that all successors are non-empty, since non-empty successors are used more commonly in practice when developing models \cite{beauty}. This implies that $A_{min}$ is initialized to 1 for each $A \in V$. For each turtle symbol $T \in V$, $T_{min} = T_{max} = 1$,$(T,T)_{min} = (T,T)_{max} = 1$ and $(T,A)_{min} = (T,A)_{max} = 0$ for every $A \in V, A \neq T$.
	
	The next two subsections describe the logical rules for deducing information about growth and successor length, starting with deducing growth.
	
	\subsection{Deducing Growth}
	
	Consider input $\alpha = (\omega_{0}, \ldots, \omega_{n})$, $\omega_{i} \in V^{*}$, $0 \leq i \leq n$ with alphabet $V$. Deduction of growth in PMIT is based on two mechanisms; the first being the determination of so-called \textit{successor fragments}, of which there are four types. 
	
	\begin{itemize}
		\item A word $\omega$ is an A-subword fragment if $\omega$ must be a subword of $succ(A)$.
		\item A word $\omega$ is an A-prefix fragment if $\omega$ must be a prefix of $succ(A)$.
		\item A word $\omega$ is an A-suffix fragment if $\omega$ must be a suffix of $succ(A)$.
		\item A word $\omega$ is an A-superstring fragment if $\omega$ must be a superstring of $succ(A)$.
	\end{itemize}
	
	As PMIT runs, it can determine additional successor fragments, which can help to deduce growth. Certain prefix and suffix fragments can be found for the first and last symbols in each input word by the following process.  Consider two words such that $\omega_{1} \Rightarrow \omega_{2}$. It is possible to scan $\omega_{1}$ from left to right until the first non-turtle graphics symbol is scanned (say, A, where the word scanned is $\alpha A$). Then, in $\omega_{2}$, PMIT skips over the graphical symbols in $\alpha$ (since each symbol in $\alpha$ has a known identity production), and the next $A_{min}$ symbols, $\beta$, (the current value of the lower bound for $|succ(A)|$) must be an A-prefix fragment. Furthermore, since branching symbols must be paired and balanced within a successor, if a $[$ symbol is met, the prefix fragment must also contain all symbols until a matching $]$ symbol is met. Similarly, an A-superstring fragment can be found by skipping $\alpha$ symbols, then taking the next $A_{max}$ symbols from $\omega_{2}$ (the upper bound on $|succ(A)|$). If a superstring fragment contains a $[$ symbol without the matching $]$ symbol, then it is reduced to the symbol before the unmatched $[$ symbol. Then, lower and upper bounds on the growth of $B$ by $A$ ($(A,B)_{min}$ and $(A,B)_{max}$) for each $B \in V$ can be found by counting the number of $B$ symbols in any prefix and superstring fragments respectively and changing them if the bounds are improved. For a suffix fragment, the process is identical except the scan goes from right to left starting at the end of $\omega_{1}$.
	
	\begin{example}
		Consider input strings $\omega_{1} = \text{+}\text{+}\text{+}A[\text{-}FF][\text{+}F]BF$ and $\omega_{2} = \text{+}\text{+}\text{+}+A[\text{-}FF][\text{-}FF][\text{+}F][\text{+}F]BFF$. Assume thus far PMIT has calculated that $A_{min} = 2$ and $A_{max} = 8$. Then, it can scan $\omega_{1}$ until $A$ is found and record that $\alpha = \text{+++}$. An $A$-prefix fragment is $+A$ as those are the first two ($A_{min}$) symbols of $\omega_{2}$ after skipping $\alpha$. An $A$-superstring fragment is $\text{+}A[\text{-}FF][$ as those are the first eight ($A_{max}$) symbols of $\omega_{2}$ after skipping $\alpha$, which can be reduced to $\text{+}A[\text{-}FF]$ due to the unbalanced $[$ symbol. By counting within the prefix fragment, lower bounds on the growth for $A$ are $(A,\text{+})_{min} := 1$ and $(A,A)_{min} := 1$, while upper bounds can be found from the superstring fragment to be $(A,\text{+})_{max} := 1, (A,\text{-})_{max} := 1$, $(A,A)_{max} := 1$,$(A,[)_{max} := 1$, $(A,])_{max} := 1$ and $(A,F)_{max} := 2$.
		\label{example:1}
	\end{example}	
	
	%	\begin{table}
	%	\centering
	%	\begin{tabular}{|l|l|l|l|}
	%		\hline 
	%		\multicolumn{4}{|c|}{Assumptions}\\	\hline 
	%		\multicolumn{4}{|l|}{$|succ(A)|_{min} = 1$ and $|succ(A)|_{A} = 8$}\\ \hline
	%		\multicolumn{4}{|l|}{$\omega_{1}$: $+++A[-FF][-FF][+F][+F]BFFF$}\\ \hline
	%		\multicolumn{4}{|l|}{$\omega_{2}$: $++++A[-FF][-FF][-FF][+F][+F][+F]BFFFF$}\\ \hline
	%		\multicolumn{4}{|c|}{Process}\\	\hline 
	%		1. & \multicolumn{3}{|l|}{Find $A$ after passing $+++$. Remove $+++$ from $W_{2}$.}\\
	%		2. & \multicolumn{3}{|l|}{Prefix fragment of $A \to +*$ since $|succ(A)_{min} = 1$ and the first symbol in $\omega_{2}$ is n%ow $+$.}\\
	%		3. & \multicolumn{3}{|l|}{Superstring fragment for $A \to +A[-FF][$ since $A_{max} = 8$.}\\
	%		4. & \multicolumn{3}{|l|}{Superstring fragment reduced to $A \to +A[-FF]$ due to unbalanced $[$.}\\
	%		5. & \multicolumn{3}{|l|}{By counting, $|succ(A)|_{min,+} = 1$ and $|succ(A)|_{max,+} = 1, (A,A)_{max} = 1$,} \\
	%		   & \multicolumn{3}{|l|}{$|succ(A)|_{max,[} = 1$, $|succ(A)|_{max,]} = 1$ and $|succ(A)|_{max,F} = 2$} \\
	%		\hline 
	%	\end{tabular}  
	%	\caption{Example of the prefix and superstring fragment rule process, where $A_{min}$ and $|succ(A)|_{A}$ have already been determined and $\omega_{1} \Rightarrow \omega_{2}$}.
	%\label{table:1}
	%\end{table}
	
	The second mechanism for deduction of growth is based on calculating the number of times each symbol $A \in V$ appears in word $\omega_{i}$ above the number implied from $\omega_{i-1}$ together with the current values of each lower bound $(B,A)_{min}$, for each $B \in V$. Formally, a programming variable for the \textit{accounted for growth} of a symbol $A \in V$ for $1 \leq i \leq n$, denoted as $G_{acc}(i,A)$ is:
	
	\begin{equation}
	G_{acc}(i,A) := \sum_{B \in V} (|\omega_{i-1}|_{B} \cdot (B,A)_{min}).
	\label{eq:1}
	\end{equation}
	
	\noindent The \textit{unaccounted for growth} for a symbol $A$, denoted as $G_{ua}(i,A)$, is computed as $G_{ua}(i,A) := |\omega_{i}|_{A} - G_{acc}(i,A)$. 
	
	The unaccounted for growth can improve the growth bounds. Then, $(B,A)_{max}$ is set (if it can be reduced) under the assumption that all unaccounted for $A$ symbols are produced by $B$ symbols. Furthermore, $(B,A)_{max}$ is set to be the lowest such value computed for any word from 1 to $n$, where $B$ occurs, as any of the $n$ words can be used to improve the maximum. And, $|succ(B)|_{A}$ must be less than or equal to $(B,A)_{min}$ plus the additional unaccounted for growth of $A$ divided by the number of $B$ symbols (if there is at least one) in the previous word, as computed by
	
	\begin{equation}
	(B,A)_{max} := \min_{1 \leq i \leq n, \atop |\omega_{i-1}|_{B} > 0} \Bigg((B,A)_{min} + \left \lfloor \tfrac{G_{ua}(i,A)} {|\omega_{i-1}|_{B}} \right \rfloor \Bigg).
	\label{eq:2}
	\end{equation}
	
	\noindent Indeed, the accounted for growth of $A$ is always updated whenever values of $(B,A)_{min}$ change, and the floor function is used since $|succ(B)|_{A}$ is a non-negative integer. For example, if $\omega_{i-1} = ABA$, $\omega_{i} = ABABBBABA$, $(A,A)_{min} = 1$, and $(B,A)_{min} = 0$, then the accounted for growth of $A$ in $\omega_{i}$ is computed by $G_{acc}(i,A) = (A,A)_{min} \cdot |\omega_{i-1}|_{A} + (B,A)_{min} \cdot |\omega_{i-1}|_{B} = 1 \cdot 2 + 0 \cdot 1 = 2$. This leaves two $A$'s in $\omega_{i}$ unaccounted for. An upper bound on the value of $|succ(A)|_{A}$ is set when the $A$'s in $\omega_{i-1}$ produce all of the unaccounted for growth in $\omega_{i}$. So $A$ produces its minimum ($(A,A)_{min} = 1$) plus the unaccounted for growth of $A$ in $\omega_{i}$ ($2$) divided by the number of $A$'s in $\omega_{i-1}$ ($|\omega_{i-1}|_{A} = 2$), hence $(A,A)_{max} := 2$. Similarly, $(B,A)_{max}$ is achieved when only $B$'s produce all unaccounted for growth of $A$; this sets $(B,A)_{max}$ to $(B,A)_{min} = 0$ plus the unaccounted for growth ($2$) divided by the number of $B$'s in $\omega_{i-1}$ ($1$), which is 2.
	
	Once $(B,A)_{max}$ has been determined for every $A,B  \in V$, the observed words are re-processed to compute possibly improved values for $(B,A)_{min}$. Indeed for each $(B,A)$, if $x := \sum_{C \in V \atop C \neq B} (C,A)_{max}$, and $x < |\omega_{i}|_{A}$, then this means that $|succ(B)|_{A}$ must be at least $\left \lceil \tfrac{|\omega_{i}|_{A} - x} {|\omega_{i-1}|_{B}} \right \rceil$, and then $(B,A)_{min}$ can be set to this value if its bound is improved. For example, if $\omega_{i-1}$ has 2 $A$'s and 1 $B$, and $\omega_{i}$ has 10 $A$'s, and $(A,A)_{max} = 4$, then at most two $A$'s produce eight $A$'s, thus one $B$ produces at least two $A$'s ($10$ total minus $8$ produced at most by $A$), and $(B,A)_{min}$ can be set to $2$.
	
	%, as shown in (\ref{eq:5})%
	
	%\begin{equation}
	%\begin{aligned}
	%	& \text{if} \sum_{A,B \in V, A \neq C} ((A,B)_{max} \cdot \omega_{i}|_{A}) < G(i,B) \\
	%	& \text{then} \thinspace (C,B)_{min} = (G(i,B) - \sum_{A,B \in V, A \neq C} ((A,B)_{max} \times |\omega_{i}|_{A})) \thinspace /|\omega_{i-1}|_{C}
	%\end{aligned}
	%	\label{eq:5}
	%\end{equation}
	
	\subsection{Deducing Successor Length}
	
	The deduction of $A_{min}$ and $A_{max}$ are found from two logical rules, one involving the sum of the minimum and maximum growth over all variables, and one by exploiting a technical mathematical property. The first rule simply states that $A_{min}$ is at least the sum of $(A,B)_{min}$ for every $B \in V$ and similarly $A_{max}$ is at most the sum of $(A,B)_{max}$ for every $B \in V$. The second rule is trickier but can sometimes improve the upper and lower bounds for $A_{max}$ and $A_{min}$ for $A \in V$. This takes place in two steps. First, the maximum number of symbols that can be produced by $A$ in $\omega_{i}$ is computed by: $x := |\omega_{i}| - \sum_{B \in V, B \neq A} (B_{min} \cdot |\omega_{i-1}|_{B})$. If $|\omega_{i-1}|_{A}  > 0$, let:
	
	\begin{equation}
	A^{i}_{max} := \left \lfloor \tfrac{x} {|\omega_{i-1}|_{A}} \right \rfloor
	\label{equation:x}
	\end{equation}
	
	\noindent if its value is improved. It is immediate that $A_{max}$ can be set to $\min_{1\leq i \leq n, \atop |\omega_{i-1}|_{A}> 0} A^{i}_{max}$, if its value is improved.
	
	For the second step, now that these $A^{i}_{max}$ values been calculated, to possibly improve the $A_{max}$ and $A_{min}$ values further, some intermediate values are required. Let $Y^{i} \in V$, $1 \leq i \leq n$ be such that $Y^{i}$ occurs the least frequently in $\omega_{i-1}$ with at least one copy. The current value of $Y^{i}_{max}$ will be examined as computed by Equation \ref{equation:x}; note $Y^{1}$, \ldots, $Y^{n}$ can be different. Let $V^{i}_{max} := Y^{i}_{max} + \sum_{B \in V, \atop B \neq Y^{i}} B_{min}$. 
	
	The computation of a total value, $V^{i}_{max}$, can allow refinement of the upper bound for each successor. Then, $A_{max}$ may be improved by assuming all other symbols produce their minimum and subtracting from the total maximum values.  Mathematically this is expressed as:
	
	\begin{equation}
	A_{max} := V^{i}_{max} - \sum_{B \in V, \atop B \neq A} B_{min}
	\end{equation}
	
	\noindent for $1 \leq i \leq n$, if $A$ occurs in $\omega_{i-1}$, and if the new value is smaller, which has the effect of the minimum over all $i$, $1 \leq i \leq n$. Although it is not immediately obvious that this formula is an upper bound on $|succ(A)|$, a proof is provided.

	%\begin{equation}
	%A_{min} := V^{i}_{min} - \sum_{B \in V, \atop B \neq A} B_{max}
	%\end{equation}

	\begin{claim}
		$|succ(A)| \leq V^{i}_{max} - \sum_{B \in V, \atop B \neq A} B_{min}$.
	\end{claim}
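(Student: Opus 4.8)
The plan is to peel off the definition of $V^{i}_{max}$ so that the asserted bound turns into a clean statement about the single integer $Y^{i}_{max}$ produced by Equation \ref{equation:x} (with predecessor $Y^{i}$ in word $\omega_{i}$), and then to prove that statement directly from the parallel-derivation length identity $|\omega_{i}| = \sum_{B \in V} |succ(B)| \cdot |\omega_{i-1}|_{B}$ together with the hypotheses that $Y^{i}$ occurs least often in $\omega_{i-1}$ and that $|succ(B)| \geq B_{min}$ for every $B$. Throughout I would fix an $i$ with $|\omega_{i-1}|_{A} > 0$ and abbreviate $c_{B} := |\omega_{i-1}|_{B}$ and $s_{B} := |succ(B)|$.

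First I would simplify the right-hand side: since the index sets $\{B : B \neq Y^{i}\}$ and $\{B : B \neq A\}$ differ only by exchanging the term $A_{min}$ for $Y^{i}_{min}$, we get $V^{i}_{max} - \sum_{B \neq A} B_{min} = Y^{i}_{max} + A_{min} - Y^{i}_{min}$ (this holds even when $A = Y^{i}$, where it reads $Y^{i}_{max}$). So it suffices to show $(s_{A} - A_{min}) + Y^{i}_{min} \leq Y^{i}_{max} = \lfloor x / c_{Y^{i}} \rfloor$, where $x := |\omega_{i}| - \sum_{B \neq Y^{i}} B_{min} c_{B}$. Substituting the length identity, $x = s_{Y^{i}} c_{Y^{i}} + \sum_{B \neq Y^{i}} (s_{B} - B_{min}) c_{B}$, and every summand is nonnegative since $s_{B} \geq B_{min}$ and $c_{B} \geq 0$. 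If $A = Y^{i}$ this already gives $x \geq s_{Y^{i}} c_{Y^{i}}$, hence $x/c_{Y^{i}} \geq s_{Y^{i}} = (s_{A} - A_{min}) + Y^{i}_{min}$; this case is essentially just the correctness of Equation \ref{equation:x}. If $A \neq Y^{i}$ I would keep the $A$-summand explicit, $x \geq s_{Y^{i}} c_{Y^{i}} + (s_{A} - A_{min}) c_{A}$, and then use $s_{Y^{i}} \geq Y^{i}_{min}$, $s_{A} - A_{min} \geq 0$, and $c_{A} \geq c_{Y^{i}} \geq 1$ (because $A$ occurs in $\omega_{i-1}$ and $Y^{i}$ is its least frequent symbol) to conclude $x \geq Y^{i}_{min} c_{Y^{i}} + (s_{A} - A_{min}) c_{Y^{i}}$, i.e.\ $x / c_{Y^{i}} \geq Y^{i}_{min} + (s_{A} - A_{min})$. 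In both cases the lower bound on $x/c_{Y^{i}}$ is an integer, so the floor is harmless and $Y^{i}_{max} \geq (s_{A} - A_{min}) + Y^{i}_{min}$, which rearranges to the claim.

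The one load-bearing step --- and the only place the choice of $Y^{i}$ as the \emph{globally} least frequent symbol of $\omega_{i-1}$ matters --- is passing from $(s_{A} - A_{min}) c_{A}$ to $(s_{A} - A_{min}) c_{Y^{i}}$, i.e.\ discarding the ratio $c_{A}/c_{Y^{i}} \geq 1$ before dividing through by $c_{Y^{i}}$; this is exactly what makes the unknown multiplicities $c_{A}$ cancel and leaves a term-by-term comparison against $A_{min}$. The remaining ingredients --- the length identity, the non-emptiness/lower-bound assumption $s_{B} \geq B_{min}$, and the integrality argument that lets the floor through --- are routine. In writing up I would also state explicitly that the inequality is asserted for each $i$ with $A$ occurring in $\omega_{i-1}$, and that $Y^{i}_{max}$ always denotes the value of Equation \ref{equation:x} evaluated at predecessor $Y^{i}$ and word $\omega_{i}$, so that ``$Y^{i}_{max} = \lfloor x/c_{Y^{i}} \rfloor$'' is a definition rather than an extra fact.
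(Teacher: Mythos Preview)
Your proof is correct and follows essentially the same approach as the paper's: both first reduce the claim to $s_{A} - A_{min} \leq Y^{i}_{max} - Y^{i}_{min}$ via the same telescoping of the two sums, and both then exploit the length identity together with $c_{A} \geq c_{Y^{i}}$ (the defining property of $Y^{i}$) to push the excess $(s_{A}-A_{min})c_{A}$ down to $(s_{A}-A_{min})c_{Y^{i}}$. The only difference is cosmetic: the paper argues by contradiction against the maximality of $Y^{i}_{max}-Y^{i}_{min}$, while you unfold $x = |\omega_{i}| - \sum_{B\neq Y^{i}} B_{min}c_{B}$ directly and bound $x/c_{Y^{i}}$ from below, which makes the role of the length identity and the integrality step more explicit.
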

	
	\begin{proof}	
		From the definition of $V^{i}_{max}$, $V^{i}_{max}  - \sum_{B \in V, \atop B \neq A} B_{min} = Y^{i}_{max} + \sum_{B \in V, \atop B \neq Y^{i}} B_{min} - \sum_{B \in V, \atop B \neq A} B_{min}$, and subtracting identical terms from the two summations gives $Y^{i}_{max} + A_{min} - Y^{i}_{min}$. Assume by contradiction that $|succ(A)|$ is greater than this formula. By Equation \ref{equation:x}, $Y^{i}_{max}$ is the maximum possible integer value such that $\sum_{B \in V, \atop B \neq Y^{i}} (B_{min} \cdot |\omega_{i-1}|_{B}) + Y^{i}_{max} \cdot |\omega_{i-1}|_{Y^{i}} \leq |\omega_{i}|$. Notice that the left hand side is equal to $\sum_{B \in V} (B_{min} \cdot |\omega_{i-1}|_{B}) + (Y^{i}_{max} - Y^{i}_{min}) \cdot |\omega_{i-1}|_{Y^{i}} \leq |\omega_{i}|$ and $(Y^{i}_{max} - Y^{i}_{min})$ is the largest integer possible such that this happens. But the summation gives all letters that can be derived from the $B_{min}$ values. Since $|succ(A)| - A_{min} > Y^{i}_{max} - Y^{i}_{min}$ and $|\omega_{i-1}|_{A} \geq |\omega_{i-1}|_{Y^{i}}$ (since $Y^{i}$ is the least frequently occurring symbol), this implies $\sum_{B \in V} (B_{min} \cdot |\omega_{i-1}|_{B}) + (|succ(A)| - A_{min}) \cdot |\omega_{i-1}|_{A} \leq |\omega_{i}|$. This contradicts the maximality of $Y^{i}_{max} - Y^{i}_{min}$, hence the inequality holds.
		\qed
	\end{proof}

	Thus, $A_{max}$ can be set in this fashion. Similarly, $A_{min}$ can be set by taking $Y^{i}$ that occurs most frequently.
	
	\begin{example}
		Consider the inputs $\omega_{1} = \text{+}AB\text{+}A$ and $\omega_{2} = \text{+}\text{+}ABABBB\text{+}\text{+}ABA$. Assume that $A_{min} = 2$, $B_{min} = 1$, then $B^{1}_{max}$ is computed in Equation \ref{equation:x} by taking $|\omega_{2}| = 13 - (A_{min} \cdot |\omega_{1}|_{A} + (\text{+}_{min}) \cdot |\omega_{1}|_{+} = 13 - (2 \cdot 2 + 1 \cdot 2) = 7$, and dividing by $|\omega_{1}|_{B}$ gives $1$. Hence, $B^{1}_{max} := 7$ and $B$ occurs least frequently in $\omega_{1}$. Thus $Y^{1} = B$, $Y^{1}_{max} := 7$, and $V^{1}_{max} := 7 + A_{min} + (\text{+}_{min}) = 10$. Then $A_{max} := V_{max} - B_{min} - (\text{+}_{min}) = 8$.
		\label{example:2}
	\end{example}

	\section{Encoding for the L-system Inference Problem}
	
	In this section, the GA and encoding used by PMIT is described and contrasted with previous approaches.
	
	\subsection{Parameter Optimization}
	
	The efficient search of a GA is controlled, in part, by the settings of the control parameters: population size, crossover weight, and mutation weight. Optimal parameter settings have been shown to be problem specific \cite{hyperparameter}. The process of finding the optimal control parameter settings is called \textit{hyperparameter search}. Although the parameter settings can be tuned by hand, the advantage to searching is that it can be more complete and free of experimental error. The drawback is that hyperparameter search algorithms usually have a control parameter of their own; however, it has been found that Random Search (RS) provides good parameter optimization \cite{hyperparameter}. RS works by trying different parameter sets called trials. The one that produces the best result is considered the new best. This process continues until none of the trials produces a better result than the current best parameter set. Sixteen trials were found to provide good results for a wide variety of problems and so this was used as the trial limit \cite{hyperparameter}. The parameters settings for the GA were set as follows, as suggested by Grefenstette \cite{grefenstette_optimalGAparameter}. For population size, the settings were bound from 10 to 125 in increments of 5. For crossover weight, the settings were bound from 0.6 to 0.95 in increments of 0.05. For mutation weight, the settings were bound from 0.01 to 0.20 in increments of 0.01 with the additional values of 0.001 and 0.0001 permitted. For PMIT, the optimal parameter settings were shown to be 100 for population size, 0.85 crossover weight, and 0.10 for mutation weight. These parameters were used for all further analysis.
	
	\subsection{Fitness Function and Termination Conditions}
	
	The fitness function for PMIT compares the symbols in the observed data to the symbols in the words produced by the candidate solution position by position. An error is counted if the symbols do not match or if the candidate solution is too long or short. For example, when comparing the words $ABAAB$ to $ABBA$, there are three errors. The third and fourth symbols do not match, and a fifth symbol is expected but does not exist. This gives a fitness $F$ of $3 / 5$. If the candidate solution produces more than double the number of symbols expected, it is not evaluated and assigned an extremely high fitness so that it will not pass the survival step. Since errors early on in the input words $\omega_{0}, \ldots ,\omega_{n}$ will cause errors later, a word $\omega_{i}$, is only assessed if there are no errors for each preceding word, and 1.0 is added to $F$ for each unevaluated word. This strongly encourages the GA to find solutions that incrementally produce more and more of the observed words.
	
	PMIT uses three termination conditions to determine when to stop running. First, PMIT stops if a solution is found with a fitness of 0.0 as such a solution perfectly describes the observed data. Second, PMIT stops after 4 hours of execution if no solution has been found. Third, PMIT stops when the population has converged and can no longer find better solutions. This is done by recording the current generation whenever a new best solution is found as $Gen_{best}$. If after an additional $Gen_{best}$ generations, no better solutions are found, then PMIT terminates. To prevent PMIT from terminating early due to random chance, PMIT must perform at least 1,000 generations for the third condition only. This third condition is added to prevent the GA from becoming a random search post-convergence and finding an L-system by chance skewing the results.
	
	\subsection{Encoding Scheme and Genomic Structure}
	
	The encoding scheme used most commonly in literature (e.g., \cite{runqiang_inferGA,mock_wildwood}) is to have a gene represent each possible symbol in a successor. The number of genes for the approaches in literature varies due the specific method they use to decode the genome into an L-system, although they are approximately the total length of all successors combined. With this approach, each gene represents a variable from $V$; internally, this is an integer from $1$ to $|V|$ associated to the variables. However, in some approaches (and PMIT) the decoding step needs to account for the possibility that a particular symbol in a successor \textit{does not exist} (represented by $\oslash$). When the possibility of an $\oslash$ exists, such genes have a range from $1$ to $|V|+1$. As an example, assume $V = \{A,B\}$ and $A_{min} = 2, A_{max} = 3, B_{min} = 1, B_{max} = 3$. For $A$, it is certain to have at least two symbols in the successor and the third may or may not exist. So, the first three genes represent the symbols in $succ(A)$, where the first two genes have each possible values from $\{A,B\}$ and the third gene has three possible values $\{A,B,\oslash\}$.
	
	Next the improvements made to the genomic structure defined by the basic encoding scheme will be described. Although they are discussed separately for ease of comprehension, all the improvements are used together.
	
	As mentioned, PMIT uses facts deduced about the possible successors to create a genomic structure for a GA. Consider the case where $V = \{A,B\}$, $A_{min} = 1$, and $A_{max} = 3$, then $succ(A)$ can be expressed as the genomic structure of $\{A,B\},\{A,B,\oslash\},\{A,B,\oslash\}$. If it is determined that $A$ has an A-prefix of $B$, then this changes the genomic structure to $\{B\},\{A,B,\oslash\},\{A,B,\oslash\}$ since it is certain that the first symbol in $succ(A)$ is $B$. Essentially, this eliminates the need for the first gene. This is similar for an A-suffix. 
	
	The second improvement to the basic encoding scheme further reduces the solution space by using growth facts to eliminate some impossible solutions. For example, if it is deduced that $(A,B)_{min} = 1$, $(A,B)_{max} = 3$, and $A_{min} = 4$, then there is no need to consider the solution $A \to BBBB$ because there are at most three $B$'s. With this in mind, when building the successor, PMIT first places the symbols known to be in $succ(A)$. To do this, PMIT uses a mapping with real-value between $0$ and $1$. For each successor, $\sum_{A,B \in V} (A,B)_{min}$ genes are created with a real value range between $0$ and $1$. Since these symbols are known to exist, the mapping is used to select an unused position within the successor. After these symbols are placed, if any additional symbols are needed up to the value of $A_{max}$, then PMIT selects the remainder by using $\oslash$, while ensuring that $(A,B)_{max}$ is not violated for any $A,B \in V$.
	
	For example, if $A_{max} = 4$, then the successor can be thought of as $A \rightarrow \_ \thinspace \_ \thinspace \_ \thinspace \_$ where ``$\_$'' represents an unused position. So if $V = \{A,B\}$ and $(A,A)_{min} = 1$ and $(A,B)_{min} = 1$, then the first gene will be an $A$ (placed at one of four positions) and the second gene a $B$. Suppose the first gene value is such that the second position is selected between $0.25$ and $0.5$, then the successor may be thought of as $A \rightarrow \_ \thinspace A \thinspace \_ \thinspace \_$. The second gene can then only select from the first, third, or fourth position, hence the use of a mapping since the available options are dependent on the first choice made. So, if the second gene selects the third position for the $B$ between $0.33$ and $0.67$, then the successor may be thought of as $A \rightarrow \_ \thinspace A \thinspace B \thinspace \_$. The remaining two genes represent the symbol to be placed in the unused positions; however, these symbols allow for the possibility of an $\oslash$ and without violating any value of $(A,B)_{max}$. If the only consideration were the $\oslash$ the basic encoding could be used; however, these genes also require a real value mapping to enforce the second constraint. To continue the example, if $(A,A)_{max} = 2$ and $(A,B)_{max} = 2$, then if the third gene (regardless of encoding scheme) is an $A$ then the successor is $A \rightarrow A \thinspace A \thinspace B \thinspace \_$, which implies that no further $A$'s can be selected. Thus, to decode the fourth gene depends on the choice of the third gene, hence the use of a real value mapped encoding.
	
	\subsection{Independent Sub-Problems}
	
	In other existing approaches (e.g., \cite{nakano_inferD0Lerrorfree,runqiang_inferGA,mock_wildwood}), the successors are found all in one step; however, this is not necessary. Since, non-graphical symbols can only be produced by non-graphical symbols, it is possible to, at first, ignore the graphical symbols, in essence creating a smaller alphabet $V_{im}$. It is possible to search for the successors over $V_{im}^{*}$, which is a much simpler problem. For example, if $A \to F[\text{+}F]B$ and $B \to F[\text{-}F]A$, then with $V_{im} = \{A,B\}$ it is only necessary to find $A \to B$ and $B \to A$. Then each graphical symbol can be solved for individually. So to continue the example above, the second step might be to add $+$ to $V_{im}$ and find $A \to +B$ and $B \to A$. Solving these smaller problems is more efficient as the individual search spaces are smaller and when summed are smaller than the solution space when trying to find the full successor in one step. To infer an L-system in this way requires two encoding schemes, one for solving the first sub-problem and then an encoding scheme for solving each graphical symbol sub-problem. All of the logical rules are performed for every sub-problem, so when referring to $A_{max}$, for example, this is within the context of the current sub-problem. In addition, to split the problem up, the prefix and suffix fragments are used to partially solve the problem. For the first sub-problem, the encoding works exactly as already described in the preceding two subsections. 
	
	For the remaining sub-problems, the solution from the previous sub-problem is treated as a subword fragment for symbols that have no prefix/suffix fragment. Conceptually, the symbols are being inserted, before, after or in the middle of the subword fragments defined by the previous sub-problem's solution. To continue the example above, if $A \to AB$ is the solution to the first sub-problem for symbol $A$, $V_{im} = \{A,B,\text{+}\}$, $A_{min} = 2$, and $A_{max} = 5$, then the successor can be represented by the genomic structure:
	
	\begin{center}
		$\{\text{+},\oslash\},\{\text{+},\oslash\},\{\text{+},\oslash\},\{A\},\{\text{+},\oslash\},\{\text{+},\oslash\},\{\text{+},\oslash\},\{B\},\{\text{+},\oslash\},\{\text{+},\oslash\},\{\text{+},\oslash\}$.
	\end{center}
	
	Although this may appear at first like it needs nine genes to represent all the different possibilities, since $A_{max} = 5$ and two symbols already exist in $succ(A)$, there can be at most three $+$ symbols. In other words, in the genomic structure at least six of the values are $\oslash$, so alternatively it can be thought of as $A \rightarrow \_ \thinspace \_ \thinspace \_ \thinspace A \_ \thinspace \_ \thinspace \_ \thinspace B \_ \thinspace \_ \thinspace \_ \thinspace$, and at most three of the nine positions need to be selected, while also allowing for a $\oslash$ to be placed in lieu of a $+$. To do this only three genes are needed using real value mapped encoding, where half of the range corresponds to placing a $\oslash$ in a position, and the remainder is mapped to putting a $+$ in a position as previously described.
	
	For symbols with a prefix/suffix fragment, they are modified to remove any symbols not previously found except the current sub-problem's symbol. For example, if the current symbol is $+$, and the A-prefix is $\text{+}A\text{-}\text{+}$ then the A-prefix is modified to $\text{+}A\text{+}$. With such a prefix, no further $+$ symbols can be added before or in the middle of the $\text{+}A\text{+}$ as this is not possible, and similarly for a suffix fragment no symbols may be added after or in the middle of it.
	
	\section{Evaluation and Results}
	
	\subsection{Data Set}
	
	To evaluate PMIT's ability to infer D0L-systems, ten fractals, plus the six plant-like fractal variants inferred by the existing program LGIN \cite{beauty,nakano_inferD0Lerrorfree}, and twelve other biological models were selected from the vlab online repository \cite{algorithmicbotany}. The biological models consist of ten algaes, apple twig with blossoms, and a ``Fibonacci Bush'', which was found to be aesthetically realistic \cite{beauty}. The dataset compares favourably to similar studies where only some variants of one or two models are considered \cite{nakano_inferD0Lerrorfree,runqiang_inferGA}. The data set is also of greater complexity by considering models with alphabets from between $2$ to $31$ symbols compared to two symbol alphabets \cite{nakano_inferD0Lerrorfree,runqiang_inferGA}. In examining the data set, it was observed that there were gaps both in terms of successor lengths and alphabet size. To more fully evaluate PMIT, additional L-systems were created by bootstrapping. This was done by taking L-systems from the test suite above, then combining successors from multiple L-systems to create L-systems with every combination of alphabet size from $3$ to $25$ in increments of 2 and longest successor length from $5$ to $25$ in increments of 5. To get successors of the proper length some ``F'' symbols were trimmed from longer successors. This set of ``fake'' L-systems are called \textit{generated L-systems}.
	
	\subsection{Performance Metrics}
	
	Two performance metrics are used to measure how well PMIT can infer D0L-systems. The first metric is \textit{success rate} (SR) which is defined as the percentage of times PMIT can find any L-system that describes the observed data. The second metric is \textit{mean time to solve} (MTTS) measured to the millisecond level since some models solve in sub-second time. Time was measured using a single core of an Intel 4770 @ 3.4 GHz with 12 GB of RAM on Windows 10. PMIT is only allowed to execute for at most $4$ hours (14400 seconds) and reaching this limit is considered a failure. It is reasoned that if PMIT cannot solve a D0L-system in this amount of time, then it not a particularly practical tool. These metrics are consistent with those found in literature \cite{nakano_inferD0Lerrorfree, runqiang_inferGA}.
	
	\subsection{Results}
	
	Three programs were evaluated. The first is the full PMIT program (implemented in C++ using Windows 10), the second is a simpler restriction of PMIT that uses a brute force algorithm implemented without using the GA or logical rules, and the existing program LGIN. Results are shown in Table \ref{table:results1}. No SR is shown for LGIN as it is not explicitly stated; however, it is implied to be 100\% \cite{nakano_inferD0Lerrorfree} for all rows where a time is written. The variants used by LGIN \cite{beauty, nakano_inferD0Lerrorfree} are marked as ``Fractal Plant $v1\ldots v6$''. The success rates are all either 0\% or 100\% for the non-generated models, indicating that a problem is either solved or not. In general, PMIT is fairly successful at solving the fractals, and the ``Fractal Plant'' variants, and was also able infer \textit{Ditria reptans}. It was observed that PMIT was able to solve the other plant models up to the point of inferring the $F$ and $f$ symbols, as indicated in the ``Infer Growth'' column. For example, PMIT inferred for \textit{Aphanocladia} that $A \to BA$, $B \to U[-C]UU[+/C/]U$; however, it was not able to then infer $C \rightarrow FFfFFfFFfFF[\text{-}F^{4}]fFFfFF[\text{+}F^{3}]fFFfFF[\text{-}FF]fFFf$. From a practical perspective, this is useful as the difficult part, for a human, is determining the growth mechanisms, more so than the lines represented by the $F$ and $f$ symbols. Therefore, PMIT is a useful aide to human experts even when it cannot infer the complete L-system.
	
	For the generated models, a success rate is not reported, instead a generated model is considered \textit{solved} if there is a 100\% success rate and \textit{unsolved} otherwise. Figure \ref{fig:2} gives one point for every L-system, either generated or not, that PMIT attempted to solve. They are represented by a diamond it was inferred with 100\% SR and a cross otherwise. It is evident that the figure shows a region described by alphabet size and longest successor length that PMIT can reliably solve. PMIT can infer L-systems with $|V|=17$, if the successors are short (5) and can infer fairly long successors (25) when $|V|=3$. Computing the \textit{sum of successor words} $\sum_{A \in V} |succ(A)|$, then PMIT is able to infer L-systems where such a sum is less than $140$, which compares favorably to approaches in literature where the sum is at most $20$.
	
	Overall, in terms of MTTS, PMIT is generally slower than LGIN \cite{nakano_inferD0Lerrorfree} for $|V|=2$; however, PMIT can reliably infer L-systems with larger alphabet sizes and successor lengths and still does so with a practical average of $17.762$ seconds. Finally, in comparison to the brute force algorithm with the basic encoding scheme it can be seen that the bounds provided by the logical rules, plus the improved encoding scheme and the use of a GA provide considerable improvement as the average MTTS is $621.998$ seconds.
	
	\begin{table}[h!]
		\centering
		\begin{tabular}{|c||c|c|c||c|c||c|}
			\hline
			\multirow{2}{*}{Model} & \multicolumn{3}{c||}{PMIT} & \multicolumn{2}{c|}{Brute Force} & LGIN \cite{nakano_inferD0Lerrorfree} \\ \cline{2-7}
			& SR & MTTS (s) & Infer Growth & SR & MTTS (s) & MTTS (s) \\ \hline
			Algae \cite{beauty}& 100\% & 0.001 & n/a & 100\% & 0.001 & - \\ \hline
			Cantor Dust \cite{beauty}& 100\% & 0.001 & n/a & 100\% & 0.001 & - \\ \hline
			Dragon Curve \cite{beauty}& 100\% & 0.909 & n/a & 100\% & 4.181 & - \\ \hline
			E-Curve \cite{beauty}& 0\% & 14400 & Yes & 0\% & 14400 & - \\ \hline
			Fractal Plant v1 \cite{beauty,nakano_inferD0Lerrorfree} & 100\% & 33.680 & n/a & 100\% & 163.498 & 2.834 \\ \hline
			Fractal Plant v2 \cite{beauty,nakano_inferD0Lerrorfree} & 100\% & 0.021 & n/a & 100\% & 5.019 & 0.078 \\ \hline
			Fractal Plant v3 \cite{beauty,nakano_inferD0Lerrorfree} & 100\% & 0.023 & n/a & 100\% & 5.290 & 0.120 \\ \hline
			Fractal Plant v4 \cite{beauty,nakano_inferD0Lerrorfree} & 100\% & 0.042 & n/a & 100\% & 6.571 & 0.414 \\ \hline
			Fractal Plant v5 \cite{beauty,nakano_inferD0Lerrorfree} & 100\% & 34.952 & n/a & 100\% & 171.003 & 0.406 \\ \hline
			Fractal Plant v6 \cite{beauty,nakano_inferD0Lerrorfree} & 100\% & 31.107 & n/a & 100\% & 174.976 & 0.397 \\ \hline
			Gosper Curve \cite{beauty}& 100\% & 71.354 & n/a& 100\% & 921.911 & - \\ \hline
			Koch Curve \cite{beauty}& 100\% & 0.003 & n/a& 100\% & 0.023 & - \\ \hline
			Peano \cite{beauty}& 0\% & 14400 & Yes & 0\% & 14400 & - \\ \hline
			Pythagoras Tree \cite{beauty}& 100\% & 0.041 & n/a & 100\% & 2.894 & - \\ \hline
			Sierpenski Triangle v1 \cite{beauty}& 100\% & 2.628 & n/a & 100\% & 267.629 & - \\ \hline
			Sierpenski Triangle v2 \cite{beauty}& 100\% & 0.086 & n/a & 100\% & 128.043 & - \\ \hline
			\textit{Aphanocladia} \cite{algorithmicbotany}& 0\% & 54.044 & Yes & 0\% & 14400 & - \\ \hline
			\textit{Dipterosiphonia} v1 \cite{algorithmicbotany}& 0\% & 14400 & No & 0\% & 14400 & - \\ \hline
			\textit{Dipterosiphonia} v2 \cite{algorithmicbotany}& 0\% & 14400 & Yes & 0\% & 14400 & - \\ \hline
			\textit{Ditira Reptans} \cite{algorithmicbotany}& 100\% & 73.821 & n/a & 100\% & 6856.943 & - \\ \hline
			\textit{Ditira Zonaricola} \cite{algorithmicbotany}& 0\% & 74.006 & Yes & 0\% & 14400 & - \\ \hline
			\textit{Herpopteros} \cite{algorithmicbotany}& 0\% & 81.530 & Yes & 0\% & 14400 & - \\ \hline
			\textit{Herposiphonia} \cite{algorithmicbotany}& 0\% & 298.114 & Yes & 0\% & 14400 & - \\ \hline
			\textit{Metamorphe} \cite{algorithmicbotany}& 0\% & 14400 & Yes & 0\% & 14400 & - \\ \hline
			\textit{Pterocladellium} \cite{algorithmicbotany}& 0\% & 14400 & No & 0\% & 14400 & - \\ \hline
			\textit{Tenuissimum} \cite{algorithmicbotany}& 0\% & 14400 & No & 0\% & 14400 & - \\ \hline
			Apple Twig \cite{algorithmicbotany}& 0\% & 14400 & No & 0\% & 14400 & - \\ \hline
			Fibonacci Bush \cite{algorithmicbotany}& 0\% & 14400 & Yes & 0\% & 14400 & - \\ \hline
		\end{tabular}
		\caption{Results for PMIT, Brute Force, and LGIN \cite{nakano_inferD0Lerrorfree}, on previously developed L-system models.}
		\label{table:results1}
	\end{table}
	
	\begin{figure}[h!]
		\centering
		\includegraphics[scale=0.65]{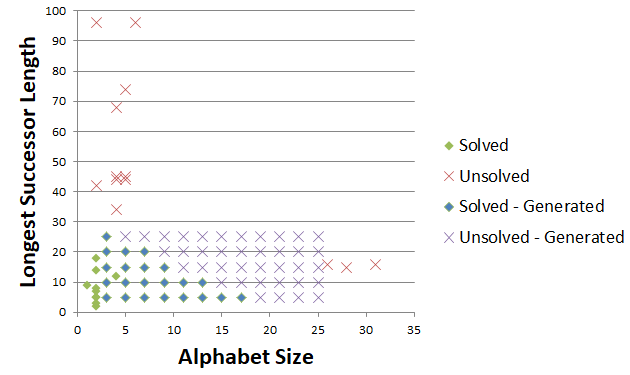}
		\caption{L-Systems Solved with 100\% SR by Alphabet Size and Longest Successor Length}
		\label{fig:2}
	\end{figure}
	
	\section{Conclusions}
	
	This paper has introduced the Plant Model Inference Tool (PMIT) as a hybrid approach, combining GA and logical rules, to infer deterministic context-free L-systems (D0L-systems). Using a search algorithm allows PMIT to infer a D0L-system to the observed data of the model, making PMIT a general solution to different models and domains. The drawback to searching for an L-system is intractability. PMIT decreases the search space by using logical rules to deduce facts about the unknown D0L-system to more tightly bind the search space. The limits of PMIT are defined by an alphabet size up to 17 and individual successor lengths up to 25. Overall, PMIT can infer systems where the sum of the successor lengths is less than or equal to $140$ symbols. This compares favourably to existing approaches that are limited to one or two symbol alphabets, individual successor lengths less than or $18$, and a total successor length less than or equal to 20 \cite{nakano_inferD0Lerrorfree,runqiang_inferGA}, especially considering the 2 symbol alphabet has been described as ``immensely complicated'' \cite{nakano_inferD0Lerrorfree}. Although, PMIT is generally slower than existing approaches when the alphabet is small (2) \cite{nakano_inferD0Lerrorfree,runqiang_inferGA}, it still infers these L-systems in less than $18$ seconds, which is fast enough to be practical.
	
	PMIT is able to infer D0L-systems fully automated under the right circumstances of alphabet size and successor length, so for such systems the potential impact is very high. PMIT is very suitable as an aide to experts by inferring the growth pattern for an unknown model as it can do so under circumstances known to exist in the real-world \cite{beauty}. Once the growth pattern is inferred, it is possible for a human to trace out the corresponding drawing pattern from the model and devise the proper turtle graphics symbols to complete the L-system.
	
	For future work, the main focus will be on improving PMIT's ability to properly infer the drawing pattern. It may be that searching for the drawing pattern can be performed using image processing techniques or, since this research shows that the inference problem can be sub-divided, alternative searching techniques can be used for the drawing patterns. Finally, additional mechanisms will be investigated to further extend the limits of alphabet size and successor length for PMIT.
	
	\bibliography{mybib}{}
	\bibliographystyle{splncs03}
	
\end{document}